\documentclass{article} 
\usepackage[final]{colm2026_conference}

\usepackage{microtype}
\usepackage{hyperref}
\usepackage{url}
\usepackage{booktabs}
\usepackage{graphicx}
\usepackage{multirow}
\usepackage{arydshln}
\usepackage{amsmath}
\usepackage{amsthm}
\newtheorem{theorem}{Theorem}
\newtheorem{definition}{Definition}[section]
\usepackage{algorithm}
\usepackage{algorithmic}
\usepackage{lineno}
\usepackage[table]{xcolor}

\newcommand{\rev}[1]{\textcolor{revblue}{#1}}
\newcommand{\revon}{\color{revblue}}
\definecolor{revblue}{rgb}{0.00, 0.0, 0.0}

\definecolor{darkblue}{rgb}{0, 0, 0.5}
\hypersetup{colorlinks=true, citecolor=darkblue, linkcolor=darkblue, urlcolor=darkblue}

\title{RankGuide: Tensor-Rank-Guided Routing and Steering for Efficient Reasoning}

\author{
Jiayi Tian$^{1}$, Yupeng Su$^{1}$, Ryan Solgi$^{1}$, Souvik Kundu$^{2}$, Zheng Zhang$^{1}$ \\
$^{1}$University of California, Santa Barbara \\
$^{2}$Intel Labs \\
\\
\texttt{\{jiayi\_tian, yupengsu, solgi\}@ucsb.edu, zhengzhang@ece.ucsb.edu, souvikk.kundu@intel.com}
}

\begin{document}

\ifcolmsubmission
\linenumbers
\fi

\maketitle

\begin{abstract}
Large reasoning models (LRMs) enhance problem-solving capabilities by generating explicit multi-step chains of thought (CoT) reasoning; however, they incur substantial inference latency and computational overhead.
To mitigate this issue, recent works have explored model collaboration paradigms, where small reasoning models (SRMs) generate intermediate reasoning steps to achieve a better accuracy–latency trade-off.
Despite recent progress, effectively and efficiently detecting and mitigating SRM failures in collaborative systems remains a key challenge.
To address this issue, we analyze SRM inference in both the generated text and hidden-state spaces, and identify three types of failure modes: \textit{overconfidence}, \textit{uncertainty}, and \textit{heavy revalidation}.
Building on these insights, we propose \textbf{RankGuide}, a framework that improves the efficiency and effectiveness of SRM–LRM collaboration through tensor-rank-guided routing and steering. 
Specifically, RankGuide leverages a routing signal that incorporates \textit{tensor-rank signals} derived from consecutive hidden states to detect when SRMs are likely to fail and selectively invoke LRMs. 
In addition, we introduce a tensor-rank-filtered steering vector extraction method to modulate the reasoning trajectory of SRMs, thereby improving their generation quality.
By improving both routing and steering through tensor-rank signals, RankGuide enables SRM–LRM collaborative systems to achieve more efficient reasoning with fewer steps and improved accuracy.
Experiments \rev{across three reasoning domains -- mathematics, code generation, and scientific QA --} demonstrate the efficacy of RankGuide in reducing latency by up to $1.75\times$ compared to LRM, while maintaining competitive accuracy relative to prior methods. The code is available at \href{https://github.com/TTTTTTris/RankGuide}{https://github.com/TTTTTTris/RankGuide}.
\end{abstract}

\section{Introduction}
Large reasoning models (LRMs) \citep{guo2025deepseek, yang2025qwen3} have recently demonstrated strong performance on complex problem-solving tasks by explicitly generating multi-step chains of thought (CoT) reasoning \citep{wei2022chain}. By decomposing problems into intermediate steps, these models achieve significant gains in domains such as mathematical reasoning \citep{hendrycks2measuring, aime24} and code generation \citep{jain2025livecodebench}. However, this improved capability comes at the cost of substantially increased inference latency and computational overhead, limiting their practicality in real-world deployment scenarios.

To mitigate this issue, recent works \citep{shi-etal-2025-speccot, panspecreason, fuscaling, zeng2026glimprouter} have explored model collaboration paradigms, where small reasoning models (SRMs) generate intermediate reasoning steps while larger models are invoked selectively. 
Despite these advances, \textit{effectively and efficiently detecting and mitigating SRM failures} remains a key challenge in collaborative systems. 
For instance, SpecReason \citep{panspecreason} relies on LLM verifier feedback to decide whether to let larger models regenerate the reasoning step, which introduces additional computational and latency overhead.
Recently, GlimpRouter \citep{zeng2026glimprouter} triggers routing prior to each step based on entropy-derived uncertainty signals, thereby avoiding the need for LLM verifiers or rollback mechanisms. 
However, we observe that such approaches can fail when SRMs produce overconfident yet incorrect reasoning steps, resulting in missed routing opportunities and confident errors. 
Consequently, inaccurate failure detection allows errors to propagate through the reasoning chain, degrading final performance and diminishing efficiency gains.

To better understand and address these limitations, we conduct a systematic analysis of SRM inference from the perspective of the generated text as well as the hidden-states. 
Our analysis reveals three characteristic failure modes: (a) \textit{overconfidence}, where the model produces confident but incorrect reasoning; (b) \textit{uncertainty}, where the model struggles to make progress; and (c) \textit{heavy revalidation}, where the model repeatedly revisits prior steps without advancing the solution. 
These findings highlight the need for more reliable signals to detect diverse SRM failure modes and guide appropriate interventions during inference.

Building on these insights, we propose \textbf{RankGuide}, a \textit{training-free} serving framework that improves the efficiency and effectiveness of SRM--LRM collaboration.
Our method leverages \textit{tensor-rank signals} derived from consecutive hidden states to guide both routing and steering for efficient LLM inference.
\rev{Concretely, the signal is used to filter the calibration dataset for steering vector generation offline, while guiding the model routing during decoding.}
Our contributions are summarized as follows:

\begin{itemize}
    \item 
    We identify SRM failures as the primary bottleneck in collaborative reasoning systems, and provide a systematic analysis of SRM inference from both generated-text and hidden-state perspectives, revealing three key failure modes: \textit{overconfidence}, \textit{uncertainty}, and \textit{heavy revalidation}.
    
    \item 
    We propose tensor-rank-guided routing, which employs a routing policy driven by a composite signal integrating token-level entropy and tensor-rank features derived from hidden states to identify potential SRM failures and selectively invoke LRMs, addressing both uncertainty and overconfident errors.
    
    \item 
    We introduce a tensor-rank-filtered steering vector extraction method that modulates the hidden-state trajectory of SRMs during inference using a steering vector computed from a tensor-rank-filtered calibration set, thereby reducing heavy revalidation in SRM generation.
\end{itemize}
Experimental results \rev{across three reasoning benchmarks spanning mathematics, code generation and scientific QA} demonstrate that RankGuide can reduce latency and computational cost while maintaining competitive accuracy compared to prior methods. Specifically, RankGuide can yield up to $1.75\times$ and $1.36\times$ latency benefit compared to LRM and SoTA collaborative inference frameworks, while maintaining or improving the accuracy. 

\section{Related Works}
\subsection{Tensor Decomposition and Low-Rank Structure}
Low-rank structures have been widely studied in deep learning for model compression and efficient representation learning, including singular value decomposition (SVD) \citep{wall2003singular, kang2024gear} for matrices and tensor-train (TT) decomposition \citep{oseledets2011tensor} for high-dimensional tensors, among other techniques. Prior works primarily leverage tensor decompositions to approximate weight matrices or adapters \citep{tian2025flat, lilestd, yang2024loretta}, thereby reducing memory and computational cost.
In contrast, our work does not use tensor decomposition as a compression tool, \rev{but instead utilizes} tensor-rank as a diagnostic signal for reasoning complexity, enabling representation-driven routing and improved steering in LRM inference.

\subsection{Model Collaboration}
Model collaboration \citep{wang2023fusing, zheng2025citer} can improve inference efficiency by combining models of different capacities, where small reasoning models (SRMs) handle simple cases and large reasoning models (LRMs) are invoked for more complex reasoning. Recent works extend this paradigm to LLM reasoning by allowing SRMs to generate intermediate steps with selective verification or refinement.
A central challenge is effective routing between SRMs and LRMs. Existing methods often rely on LLM verifier \citep{panspecreason, shi-etal-2025-speccot, fuscaling}, which is costly and may lead to suboptimal decisions. To address this, GlimpRouter \citep{zeng2026glimprouter} proposes an entropy-based confidence score that routes uncertain steps based on the SRM's output distribution, achieving improved efficiency.
However, relying solely on the SRM's output confidence as the routing signal can be biased, as SRMs may be both highly confident and incorrect. To address this limitation, we propose RankGuide, which leverages tensor-rank signals from hidden states to provide a more reliable indicator of SRM reasoning quality for routing.
\rev{Routing also differs in granularity: RouteLLM~\citep{ong2025routellm} and HybridLLM~\citep{ding2024hybridllm} route per \textit{query} and R2R~\citep{fur2r} per \textit{token}, whereas RankGuide routes per \textit{step} using signals aggregated \textit{across} steps, exposing failures invisible to any single token distribution.}

\subsection{Efficient Reasoning via Steering}
Beyond model collaboration, a prominent direction is latent-space steering of LRMs. In steering, hidden representations are modified to guide generation toward desired behaviors. Steering methods \citep{chen2025seal, azizi2025activation, xu2025easysteer} show that intervening in intermediate activations can improve reasoning quality or enforce concise generation patterns without retraining.
While these approaches improve efficiency by enhancing reasoning through latent-space calibration, their performance gains are often limited, and they typically fall short of enabling SRMs to match the accuracy of LRMs.
\rev{A complementary line makes the intervention input-adaptive rather than static~\citep{wang2025sadi, tian2025skipkv}. Our contribution is orthogonal to that axis: we improve the \textit{calibration set} the direction is estimated from rather than how it is applied, so a  rank-filtered calibration set could equally serve a dynamic steering method.}

\begin{figure}[!t]
    \centering
    \includegraphics[width=0.24\linewidth]{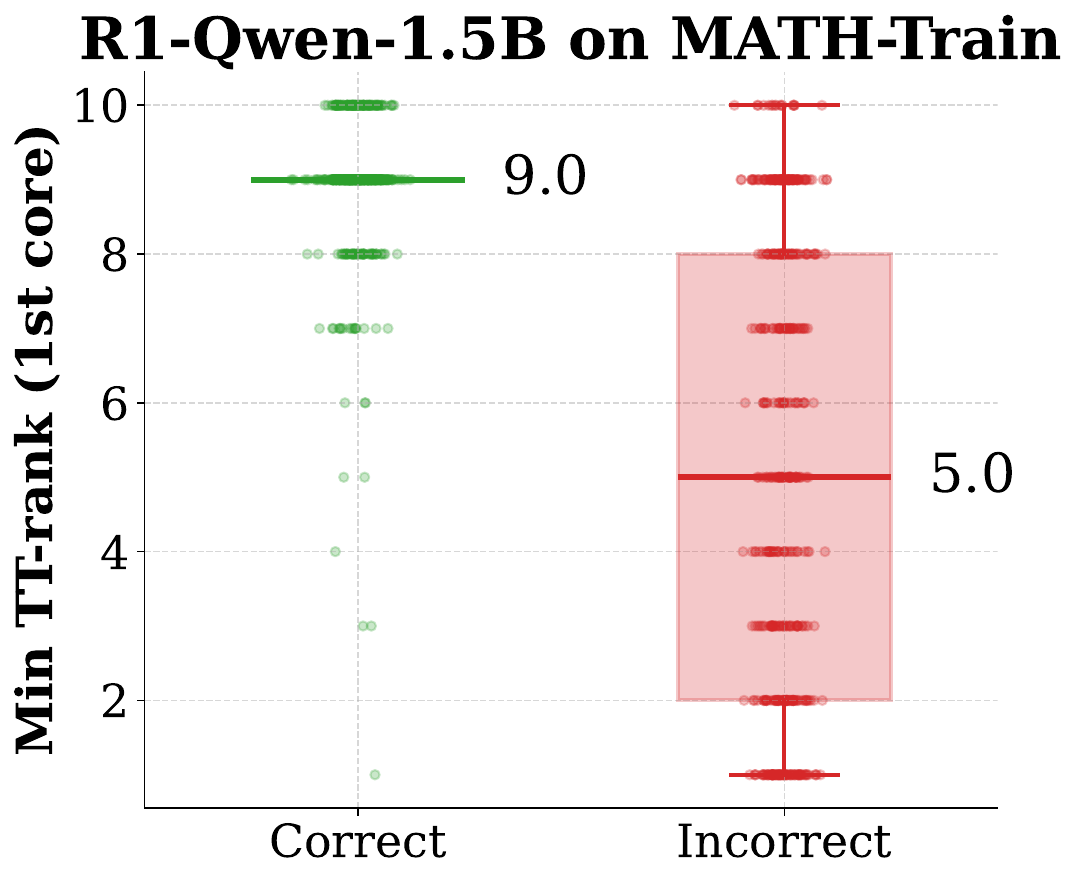}
    \includegraphics[width=0.24\linewidth]{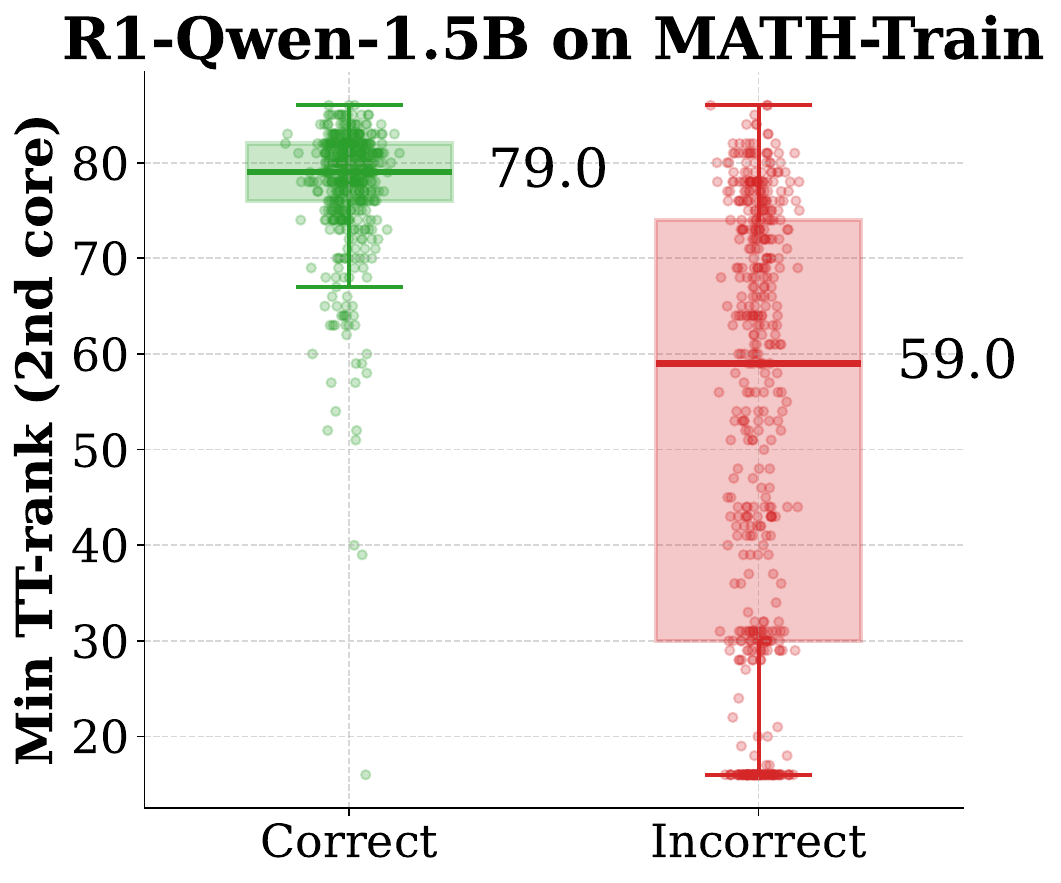}
    \includegraphics[width=0.24\linewidth]{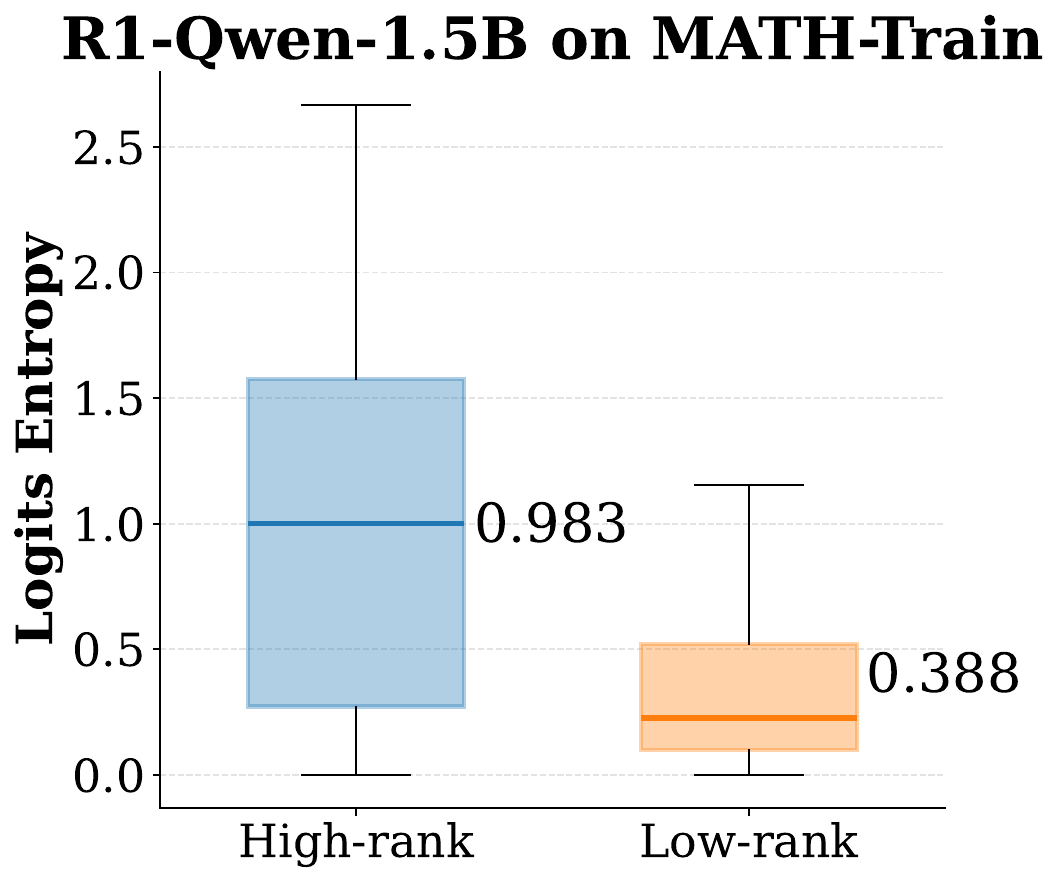}
    \includegraphics[width=0.24\linewidth]{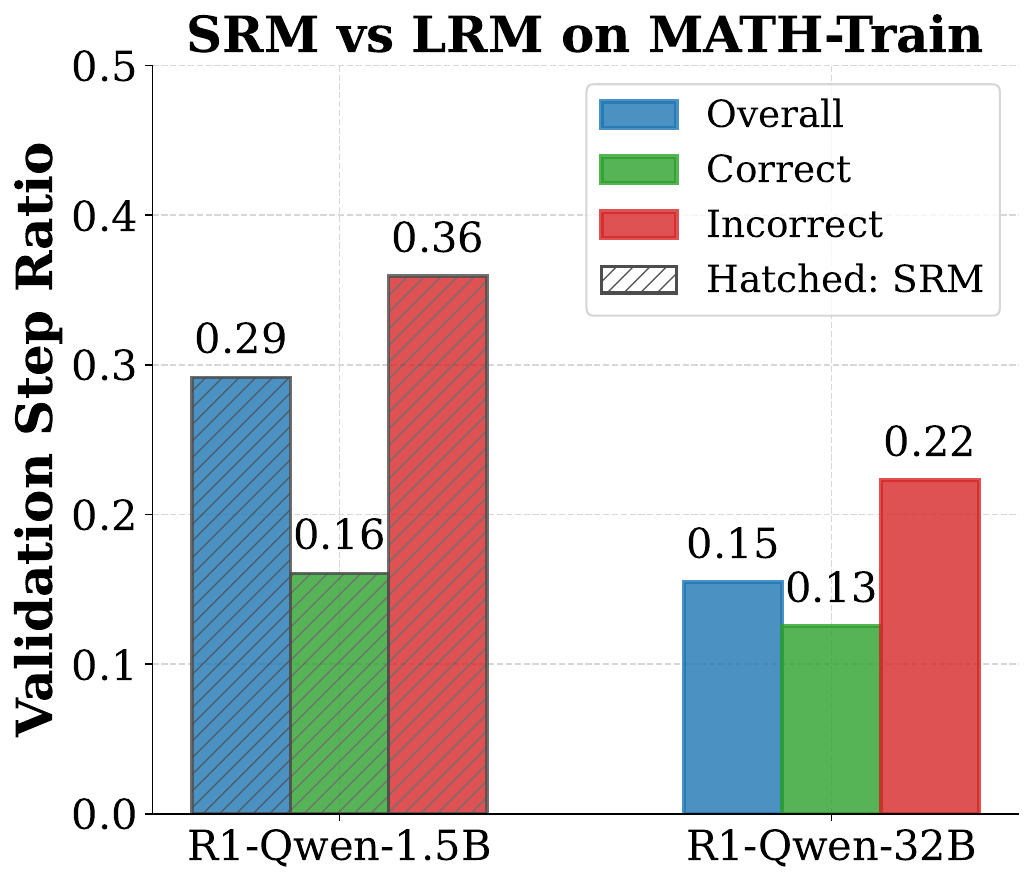}
    \caption{Case study of SRM reasoning dynamics. Left: minimum TT rank along the reasoning-step dimension. Middle-left: minimum TT rank along hidden feature dimensions. Middle-right: first-token entropy of reasoning steps. Right: validation step ratio comparison between SRMs and LRMs.}
    \vspace{-5mm}
    \label{fig:SRM motivation}
\end{figure}

\section{Motivational Case Study}
To better understand the limitations of current SRMs in reasoning and routing, we conduct a series of empirical studies.
To diagnose failure modes and redundancy in SRM generation, we use the delimiter ``\texttt{\textbackslash n\textbackslash n}''
\footnote{\rev{The double newline is a long-standing paragraph and semantic separator in text processing, baked into pretraining corpora, preprocessing pipelines, and chat templates. Modern reasoning models (DeepSeek-R1~\citep{guo2025deepseek}, QwQ, Qwen3~\citep{yang2025qwen3}) emit it between reasoning steps, and prior step-level collaboration work~\citep{panspecreason, zeng2026glimprouter} segments on the same delimiter.}}
to segment model responses into individual reasoning steps, and study both per-step properties and inter-step correlations.


\subsection{Diagnosing SRM Failure through Latent Low-Rank Structure}
\label{sec:motivation1}
To analyze correlations across reasoning steps, we extract middle-layer hidden states at delimiter positions, i.e., where the generated output matches ``\texttt{\textbackslash n\textbackslash n}''.
These representations summarize each reasoning step and influence the subsequent generation.
At generation timestep $t$, we denote the hidden states from the most recent window $W$ of reasoning steps as $\mathbf{H}_{t-W:t}\in\mathbb{R}^{W\times d_{\text{hid}}}$. 
We concatenate these representations in temporal order and reshape them into a higher-order tensor:
\begin{equation}
    \mathcal{H}_{t-W:t} \in \mathbb{R}^{W \times d_1 \times \cdots \times d_N},\ \text{where} \prod_{i=1}^{N} d_i = d_{\text{hid}}.
\end{equation}
Here, $d_i$ denotes the size of the $i$-th mode of the tensor.
We then apply tensor-train (TT) decomposition to capture potential low-rank structures across both the reasoning-step dimension and the hidden feature dimensions.
Formally, the tensor is decomposed as
\begin{equation}
\mathcal{H}_{t-W:t} \approx \mathcal{G}_0 \times \mathcal{G}_1 \times \cdots \times \mathcal{G}_N,
\end{equation}
where the TT-cores satisfy
\begin{equation}
\mathcal{G}_0 \in \mathbb{R}^{W \times r_1}, \quad
\mathcal{G}_k \in \mathbb{R}^{r_{k-1} \times d_k \times r_k}, \; (1 \le k \le N),
\end{equation}
with $\{r_k\}_{k=1}^N$ denoting the TT ranks.
We determine the TT ranks by enforcing a reconstruction error constraint $\epsilon = 0.1$, i.e., 
$\|\mathcal{H}_{t-W:t} - \hat{\mathcal{H}}_{t-W:t}\|_F / \|\mathcal{H}_{t-W:t}\|_F \le \epsilon$, 
where $\hat{\mathcal{H}}_{t-W:t}$ is the TT-reconstructed tensor. 
Specifically, we apply error-bounded TT-SVD that allocates a per-step error budget and selects the smallest ranks satisfying the constraint (see Algorithm~\ref{alg:tt_rank} in the Appendix~\ref{appendix:algorithm}).
The resulting ranks characterize the intrinsic dimensionality of the reasoning dynamics: 
$r_1$ captures the effective rank along the reasoning-step dimension, while the subsequent ranks $\{r_k\}_{k\geq 2}$ reflect the complexity of the hidden feature dimensions. 
In practice, we use a representative hidden-feature rank (e.g., $r_2$) for simplicity.

At each timestep $t$ ($t \ge W$), the sliding-window hidden representations are reshaped into a 4D tensor $\mathcal{H}_{t-W:t} \in \mathbb{R}^{W \times d_1 \times d_2 \times d_3}$, where we set $W=10$, $d_1=16$, $d_2=16$, and $d_3=\frac{d_{hid}}{d_1\times d_2}$ \rev{(further discussion of the factorization setup is in Appendix~\ref{appendix:shape})}. 
We compute TT ranks at each timestep, and report the minimum rank across timesteps for each sample, as shown in Fig.~\ref{fig:SRM motivation}.
Here, we collect generation responses for 500 correct and 500 incorrect samples from the \textsc{MATH-Train} dataset \citep{hendrycks2measuring} using \textsc{DeepSeek-R1-Distill-Qwen-1.5B} for case study. 
As illustrated, incorrect responses consistently exhibit significantly lower minimum ranks than correct ones, both along the reasoning-step dimension (Fig.~\ref{fig:SRM motivation}, left) and the hidden feature dimensions (Fig.~\ref{fig:SRM motivation}, middle-left). 
\textbf{This suggests that erroneous reasoning trajectories tend to collapse into a low-rank latent subspace, indicating a form of generation collapse in SRMs}.

\subsection{Understanding SRM Overconfidence under Latent Low-Rank Collapse}
\label{sec:motivation2}
Next, we analyze the quality of individual reasoning steps. Prior work, such as \textit{GlimpRouter} \citep{zeng2026glimprouter}, suggests that reasoning steps with high-entropy logits indicate model \textit{uncertainty} and should be routed to stronger models for verification or refinement. However, our empirical findings reveal that low-entropy predictions are not necessarily reliable.
Fig.~\ref{fig:SRM motivation} (middle-right) presents the entropy distribution of logits for reasoning steps grouped by their latent-rank structure. We define rank thresholds $T_{r_1}=8$ and $T_{r_2}=60$, 
\rev{each placed between the correct- and incorrect-response medians}, and identify low-rank steps as those satisfying $r_1(\mathcal{H}_{t-W:t}) < T_{r_1} \lor r_2(\mathcal{H}_{t-W:t}) < T_{r_2}$. 
Interestingly, we observe that low-rank steps exhibit substantially lower entropy (0.388 on average) than high-rank steps (0.983 on average), indicating higher confidence. 
Collapsed reasoning trajectories therefore often yield overconfident yet incorrect predictions, so routing strategies relying solely on entropy may fail to identify such errors.
Our findings therefore indicate that \textbf{both high-entropy (uncertain) steps and low-entropy, low-rank (overconfident) steps should be considered as routing candidates in model collaboration systems}.

\subsection{SRM Generation with Heavy Re-Validation}
\label{sec:motivation3}
\rev{A recent survey~\citep{sui2025stopoverthinking} documents an ``overthinking'' problem in chain-of-thought generation, where models produce verbose and redundant outputs.}
We further investigate the generation behavior of SRMs and LRMs and observe a distinct inefficiency: \textbf{SRMs devote substantially more reasoning steps to re-validation than LRMs.}
To quantify this behavior, we measure the \emph{validation step ratio}, defined as the proportion of reasoning steps devoted to checking or verifying intermediate results. Fig.~\ref{fig:SRM motivation} (right) reports this ratio for both SRMs and LRMs on the \textsc{MATH-Train} dataset. As shown, SRMs exhibit significantly higher validation ratios than LRMs. For example, \textsc{R1-Qwen-1.5B} allocates $29\%$ of its reasoning steps to validation on average, exceeding \textsc{R1-Qwen-32B} by approximately $2\times$.

Furthermore, \textbf{we observe that incorrect responses exhibit substantially higher validation ratios than correct ones for both SRMs and LRMs}. Specifically, the validation step ratios for incorrect responses are $2.25\times$ and $1.69\times$ higher than those for correct responses in SRMs and LRMs, respectively. 
This suggests that models tend to repeatedly verify intermediate results without making meaningful progress when the reasoning trajectory is already flawed, increasing generation length without improving correctness, whereas LRMs progress more directly toward correct solutions.

SRMs therefore appear to lack effective self-correction: once reasoning deviates, they rely on repeated verification rather than revising the trajectory. This motivates \emph{steering mechanisms} that intervene in SRM hidden-state dynamics to restore more informative and stable reasoning paths.


\section{RankGuide: Methodology}
\begin{figure}[!t]
    \centering
    \includegraphics[width=\linewidth]{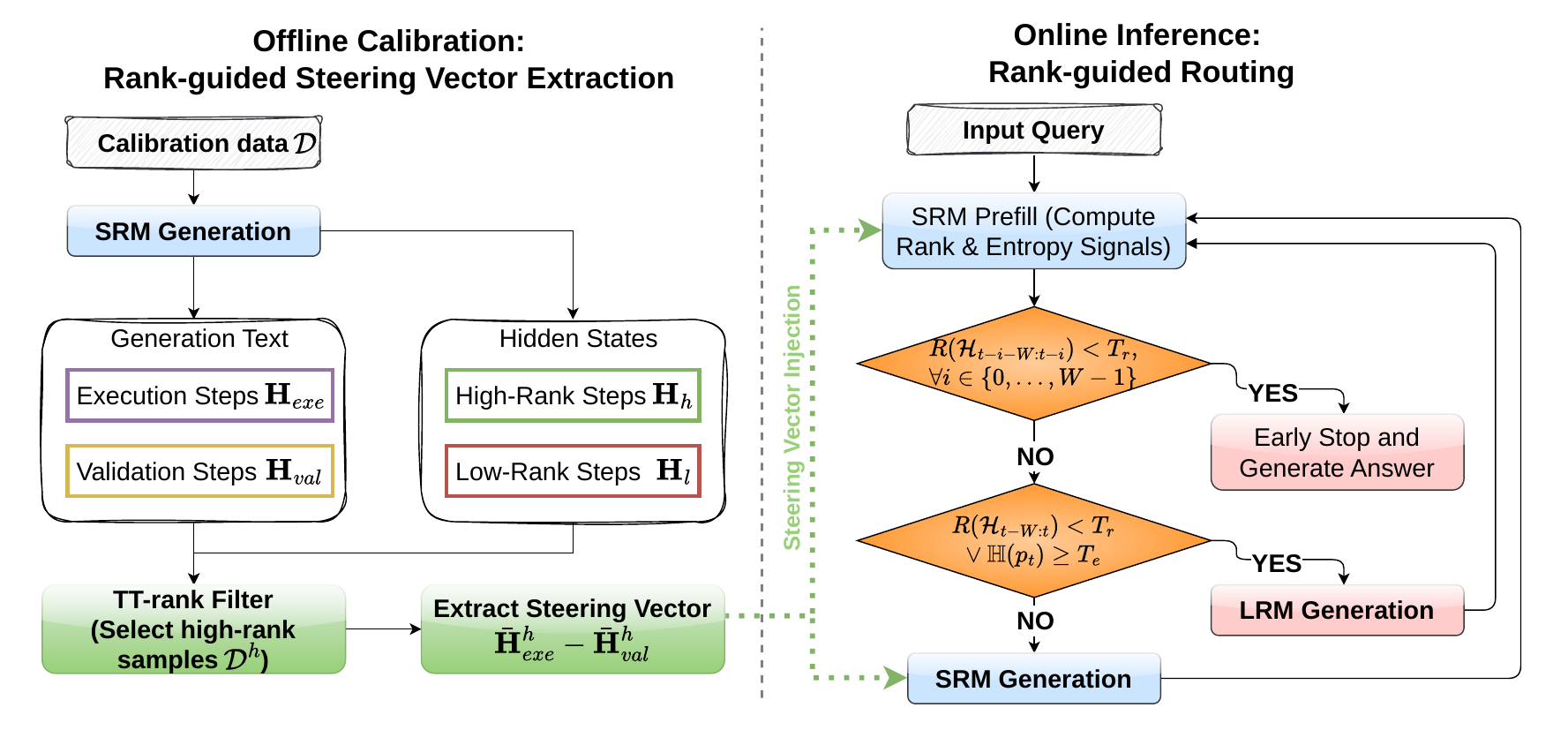}
\vspace{-5mm}
    \caption{Overview of the {RankGuide} framework. 
In the offline calibration phase (left), the method extracts a rank-guided steering vector by contrasting high-rank execution and validation hidden states. 
During online inference (right), the steering vector is injected into SRM generation, where rank and entropy signals are computed to enable adaptive routing.} 
\vspace{-5mm}
    \label{fig:Overview}
\end{figure}
We now present \textbf{RankGuide}, which leverages \textit{tensor-rank signals} to address the SRM failure modes identified above, combining rank-guided routing (\S~\ref{sec:rank_guided_routing}) with rank-guided steering (\S~\ref{sec:rank_guided_steering}) as illustrated in Fig.~\ref{fig:Overview}.

\subsection{Rank-guided Routing}
\label{sec:rank_guided_routing}

\paragraph{Routing Signals.}
Before generating the next step, given the input prompt and previously generated outputs, the SRM performs a prefill pass to extract hidden states $\{\mathbf{H}_t\}_{t=1}^T$ from an intermediate layer, corresponding to reasoning-step boundaries, and collects the top-$k$ output logits for next-token prediction.
At each decoding step $t$, we compute two signals:
(i) a tensor-rank measure $R(\mathcal{H}_{t-W:t})$ (for $t \geq W$), where $\mathcal{H}_{t-W:t}$ denotes the reshaped tensor constructed from a sliding window of $W$ consecutive reasoning steps, capturing dependencies across both the reasoning-step and hidden feature dimensions. Here, $R(\cdot)=\{r_1, r_2\}$, which includes both the reasoning-step rank ($r_1$) and a representative hidden-feature rank ($r_2$), as illustrated in \S~\ref{sec:motivation1};
(ii) the next-token prediction entropy $\mathbb{H}(p_t)$ \rev{over the renormalized top-$k$ distribution $\tilde{p}_t$ at the first token of the upcoming step}, corresponding to \S~\ref{sec:motivation2}.
We introduce separate rank thresholds $T_{r_1}$ and $T_{r_2}$, applied to the reasoning-step rank $r_1$ and hidden-feature rank $r_2$, respectively, along with an \textit{entropy threshold} $T_e$.

\rev{These thresholds are calibration-derived from MATH-Train: each is placed between the correct- and incorrect-response medians, balancing coverage of incorrect steps against unnecessary routing of correct ones.}
As discussed in the motivation section, low-rank states indicate representational collapse, while high entropy reflects model uncertainty. Accordingly, both high-entropy (uncertain) steps and low-entropy, low-rank (overconfident) steps are treated as routing candidates in our framework.

\paragraph{Routing Policy.}
Based on the computed signals, we design a two-stage routing policy to determine whether to continue SRM generation or switch to the LRM. 

During calibration, we observe that when the SRM remains in a persistently low-rank regime across multiple steps, it often produces overconfident yet invalid reasoning and continues generating until reaching the maximum generation length without producing a valid answer. 
Motivated by this observation, we examine whether the low-rank feature persists over a sliding window of $W$ reasoning steps. Specifically, if the tensor-rank measure satisfies
\begin{equation}
R(\mathcal{H}_{t-i-W:t-i}) < T_r,\quad \forall i \in \{0, \dots, W-1\},
\label{eq:early_stop}
\end{equation}
we consider the reasoning trajectory to have fully collapsed and terminate SRM generation to produce the final answer.

Otherwise, we further evaluate both rank and entropy signals at the current step. If
\begin{equation}
R(\mathcal{H}_{t-W:t}) < T_r \;\;\lor\;\; \mathbb{H}(p_t) \ge T_e,
\qquad {\revon \mathbb{H}(p_t) = -\sum\nolimits_{i=1}^{k} \tilde{p}_t^{(i)} \log \tilde{p}_t^{(i)},}
\label{eq:routing}
\end{equation}
we route to the LRM for more reliable generation; otherwise, the SRM continues generation.
This exploits confident, reliable SRM trajectories for efficiency while invoking the LRM exactly when overconfidence or uncertainty is detected.

\subsection{Rank-guided Steering}
\label{sec:rank_guided_steering}

\paragraph{Steering Vector Extraction.}
To reduce redundant revalidation in SRM generation, we extract a rank-guided steering vector at an offline calibration stage, as shown in Fig.~\ref{fig:Overview} (left).

Given a calibration dataset $\mathcal{D}$, we categorize reasoning steps along two complementary dimensions. 
First, inspired by SEAL~\citep{chen2025seal}, we classify steps in the generated text space into \emph{execution} and \emph{validation} types, where execution steps correspond to solving the problem and validation steps correspond to verifying intermediate or final results. 
Specifically, we identify \emph{validation} steps using heuristic keywords (e.g., ``Alternatively'', ``Wait'', ``verify'', etc.). 

Second, we use the tensor-rank signal defined in \S~\ref{sec:motivation1} to classify each reasoning step into high-rank and low-rank groups based on the TT ranks of its associated hidden representations. 
This captures the intrinsic dimensionality of the reasoning process and serves as an indicator of representational richness versus collapse. 
Formally, given the calibration set $\mathcal{D}$, we define the sets of execution and validation steps as
\begin{equation}
\mathcal{S}_{\text{exe}}^{\mathcal{D}}
=
\{\, j \mid j \in \text{exe},\ x(j) \in \mathcal{D} \,\},
\qquad
\mathcal{S}_{\text{val}}^{\mathcal{D}}
=
\{\, j \mid j \in \text{val},\ x(j) \in \mathcal{D} \,\},
\end{equation}
where $j$ indexes reasoning steps and $x(j)$ denotes the sample to which step $j$ belongs. 
Based on our observation that low-rank calibration samples can degrade the steering vector by introducing collapsed representations, we apply a tensor-rank filter to the calibration dataset $\mathcal{D}$ and retain only high-rank samples $\mathcal{D}^{\mathrm{h}} \subseteq \mathcal{D}$ that do not contain low-rank steps. 
We then compute the category-wise averages over the filtered sets and extract the enhanced steering vector as,
\begin{equation}
\mathbf{v}_{\text{steer}}^h =
\bar{\mathbf{H}}_{\text{exe}}^h -
\bar{\mathbf{H}}_{\text{val}}^h,
\ \text{where}\ 
\bar{\mathbf{H}}_{\text{exe}}^h =
\frac{1}{|\mathcal{S}_{\text{exe}}^{\mathcal{D}^{\mathrm{h}}}|}
\sum_{j \in \mathcal{S}_{\text{exe}}^{\mathcal{D}^{\mathrm{h}}}}
\mathbf{H}_{j},
\qquad
\bar{\mathbf{H}}_{\text{val}}^h =
\frac{1}{|\mathcal{S}_{\text{val}}^{\mathcal{D}^{\mathrm{h}}}|}
\sum_{j \in \mathcal{S}_{\text{val}}^{\mathcal{D}^{\mathrm{h}}}}
\mathbf{H}_{j}.
\end{equation}
\paragraph{Steering Vector Injection.}
During inference, we inject the rank-guided steering vector into the SRM to modulate its hidden-state trajectory. 
To preserve token-level fluency while influencing high-level reasoning, this intervention is applied selectively at reasoning step boundaries. 
Concretely, at the end of each generation step, we perform an additive intervention on the intermediate hidden representation:
\begin{equation}
\mathbf{H}_t \leftarrow \mathbf{H}_t + \alpha \cdot \mathbf{v}_{\text{steer}}^h,
\quad \text{if } y_t \in \mathcal{T}_{\mathrm{seg}}.
\end{equation}
Here, $\alpha$ is a scaling coefficient controlling the strength of the intervention, $y_t$ denotes the token generated at decoding step $t$, and 
$\mathcal{T}_{\mathrm{seg}}$ denotes the set of segment delimiter tokens marking reasoning step boundaries, which is instantiated as ``\texttt{\textbackslash n\textbackslash n}'' for implementation simplicity.
This operation biases the model's representations toward informative high-rank execution patterns, promoting more efficient and effective reasoning dynamics.
As a result, the SRM produces fewer redundant validation or collapsed reasoning steps, thereby improving the effectiveness of the downstream routing mechanism.

\begin{figure}[!t]
    \centering
    \includegraphics[width=\linewidth]{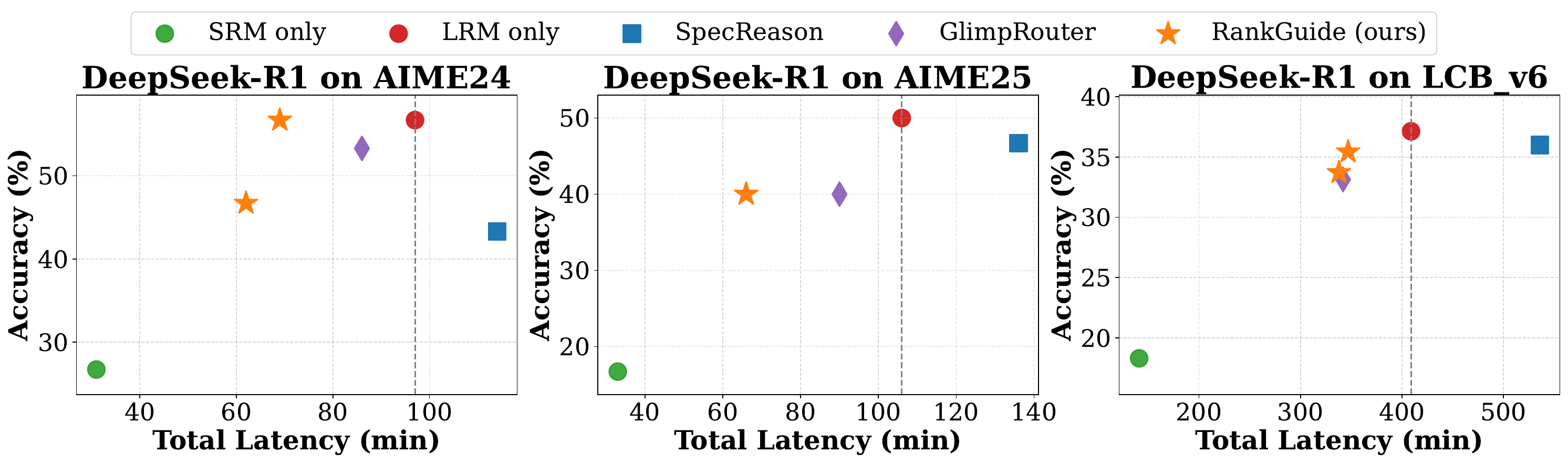}
    \includegraphics[width=\linewidth]{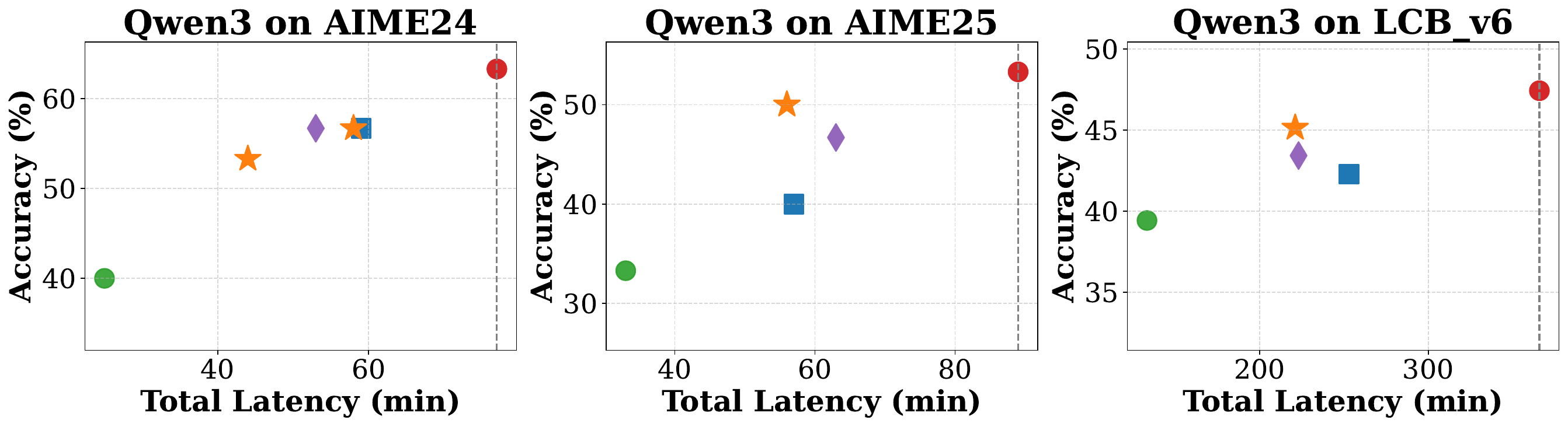}
\vspace{-5mm}
    \caption{Accuracy vs. reasoning latency on AIME-24, AIME-25, LCB-v6 using DeepSeek-R1 (Qwen-1.5B / 32B, top) and Qwen3 (4B / 32B, bottom). We report two operating points for our method (best-accuracy and best-latency), which may overlap in some cases.}
\vspace{-5mm}
    \label{fig:main-results}
\end{figure}

\section{Experiments}
\subsection{Experimental Setup}
\textbf{Models.}
We evaluate our approach on models from the Qwen3~\citep{yang2025qwen3} and DeepSeek-R1~\citep{guo2025deepseek} families. 
Specifically, we consider two SRM–LRM pairs: Qwen3-4B / 32B, and DeepSeek-R1-Distill-Qwen-1.5B / 32B.
These pairs span different model scales and architectures.
\rev{For the Qwen3 pair we use the \texttt{Qwen3-4B} checkpoint as the SRM, following the GlimpRouter paper.}\footnote{\rev{The public GlimpRouter code instead uses \texttt{Qwen3-4B-Thinking-2507}, which their paper does not state. We follow the paper, using \texttt{Qwen3-4B} and re-running their released code, which accounts for the difference between our reproduced SRM-only/GlimpRouter numbers on Qwen3 and theirs.}}

\textbf{Datasets.}
We evaluate our method on challenging reasoning benchmarks \rev{spanning three domains}.
For mathematical reasoning, we use AIME24~\citep{aime24} and AIME25~\citep{aime25}, which consist of high-difficulty competition problems.
For code generation, we use LiveCodeBench v6~\citep{jain2025livecodebench}, a benchmark designed to evaluate performance on competitive programming tasks.
\rev{For scientific reasoning, we use GPQA-Diamond~\citep{rein2024gpqa}, the highest-quality subset of GPQA: 198 graduate-level questions in biology, physics and chemistry.}

\textbf{Baselines.}
We compare our method against multiple representative baselines. 
\textit{SRM-only} and \textit{LRM-only} use a single model without routing, bounding latency and accuracy respectively.
We also include two recent routing-based methods: \textit{SpecReason}~\citep{panspecreason}, which performs speculative reasoning with LLM-based verification and rollback, and \textit{GlimpRouter}~\citep{zeng2026glimprouter}, which adopts an entropy-based routing strategy for adaptive model selection without rollback.
Implementation details and hyperparameter settings are provided in Appendix~\ref{appendix:setup}, \rev{and the routing overhead is quantified in Appendix~\ref{appendix:overhead}}.


\begin{table}[t]
\centering
\caption{Performance comparison on AIME24, LiveCodeBench v6 \rev{and GPQA-Diamond} with DeepSeek-R1 models. We report accuracy (pass@1), latency, \rev{average} number of steps, and validation step ratio. Accuracy and latency are reported relative to LRM, while the number of steps is reported relative to SRM. 
}
\resizebox{\linewidth}{!}{
\begin{tabular}{lcccc}
\toprule
\textbf{Method} & \textbf{Accuracy (\%)} & \textbf{Latency} & \textbf{Avg. Steps} & \textbf{Validation Step Ratio (\%)} \\
\midrule
\multicolumn{5}{c}{\textbf{DeepSeek-R1 (1.5B/32B) on AIME24}} \\
\midrule
SRM & 26.7 & 31 & 295 & 24.70 \\
LRM & \textbf{56.7} & 97 & 268 (-10.1\%) & 19.50 (-5.2\%) \\
\midrule
SpecReason~\citep{panspecreason}  & 43.3 (-13.4) & 114 (0.85$\times$) & 261 (-13.1\%) & 17.21 (-7.5\%) \\
GlimpRouter~\citep{zeng2026glimprouter}  & 53.3 (-3.3) & 86 (1.13$\times$) & 325 (+9.1\%) & 19.93 (-4.8\%) \\
\rowcolor{orange!15}\textbf{RankGuide (ours)} & \textbf{56.7 (+0.0)} & \textbf{69 (1.41$\times$)} & \textbf{247 (-19.3\%)} & \textbf{15.68 (-9.0\%)} \\
\midrule
\multicolumn{5}{c}{\textbf{DeepSeek-R1 (1.5B/32B) on LiveCodeBench v6}} \\
\midrule
SRM & 18.29 & 141 & 222 & 36.50 \\
LRM & \textbf{37.14} & 409 & 234 (+5.4\%) & 23.20 (-36.4\%) \\
\midrule
SpecReason~\citep{panspecreason}  & 36.00 (-1.14) & 536 (0.76$\times$) & 239 (+7.2\%) & 23.17 (-36.5\%) \\
GlimpRouter~\citep{zeng2026glimprouter}  & 33.14 (-4.00) & \textbf{342 (1.20$\times$)} & 236 (+6.3\%) & 26.30 (-28.0\%) \\
\rowcolor{orange!15}\textbf{RankGuide (ours)} & 35.43 (-1.71) & 347 (1.18$\times$) & \textbf{196 (-16.2\%)} & \textbf{21.02 (-42.4\%)} \\
\midrule
\multicolumn{5}{c}{\rev{\textbf{DeepSeek-R1 (1.5B/32B) on GPQA-Diamond}}} \\
\midrule
\rev{SRM} & \rev{41.4 (-20.2)} & \rev{152} & \rev{133} & \rev{45.46} \\
\rev{LRM} & \rev{\textbf{61.6}} & \rev{387} & \rev{112 (-15.8\%)} & \rev{\textbf{31.88 (-13.58\%)}} \\
\midrule
\rev{GlimpRouter~\citep{zeng2026glimprouter}} & \rev{55.1 (-6.5)} & \rev{\textbf{315 (1.23$\times$)}} & \rev{121 (-9.0\%)} & \rev{34.73 (-10.73\%)} \\
\rowcolor{orange!15}\rev{\textbf{RankGuide (ours)}} & \rev{57.6 (-4.0)} & \rev{\textbf{315 (1.23$\times$)}} & \rev{\textbf{114 (-14.3\%)}} & \rev{32.75 (-12.71\%)} \\
\bottomrule
\end{tabular}
}
\label{tab:main_results}
\end{table}

\subsection{Main Results}
\textbf{RankGuide consistently achieves the highest accuracy with improved latency trade-offs.}
Figure~\ref{fig:main-results} shows the accuracy–latency trade-offs under various threshold settings \rev{(full sweep in Appendix~\ref{appendix:routing-thresh})}: RankGuide consistently outperforms existing routing approaches in accuracy while remaining faster than the LRM.
For example, on AIME-24 with DeepSeek-R1, RankGuide improves accuracy by $3.3\%$ while reducing latency by $1.25\times$ compared to GlimpRouter, and achieves comparable accuracy to LRM with $1.41\times$ lower latency. 
Similarly, on AIME-25 with Qwen3, RankGuide improves accuracy by $3.3\%$ while reducing latency by $1.13\times$ compared to GlimpRouter, and attains comparable accuracy to LRM with $1.59\times$ lower latency. 
In contrast, SpecReason's repeated verification and rollback often make it slower than the LRM at lower accuracy, while GlimpRouter improves efficiency but cannot correct certain reasoning failures.

\textbf{RankGuide reduces reasoning steps and re-validation redundancy in SRMs.}
Table~\ref{tab:main_results} shows that these improvements are driven by more efficient reasoning behavior.
RankGuide reduces the total number of steps by 19.3\% on AIME24 and 16.2\% on LiveCodeBench v6, while lowering the validation step ratio by up to 42.4\% compared to SRM and up to 14.4\% compared to previous routing methods.
In contrast, GlimpRouter, which relies solely on uncertainty-based routing, fails to eliminate over-confident yet incorrect reasoning steps and thus maintains higher validation ratios and unnecessary reasoning steps.
\rev{On GPQA-Diamond, which is $6.6\times$ larger than AIME and outside our calibration domain, RankGuide beats GlimpRouter by $2.5$ points at matched latency with $5.8\%$ fewer steps and $4.1\%$ fewer tokens, recovering $93.5\%$ of LRM accuracy at $1.23\times$ lower latency. It also compresses the SRM's indiscriminate validation profile ($45.46\%$) to the LRM's selective level ($32.75\%$ vs.\ $31.88\%$).
A more detailed breakdown of validation steps across correct and incorrect responses is given in Appendix~\ref{appendix:revalidation}.}

\subsection{Ablation Studies}

\begin{table}[t]
\centering
\caption{Ablation of RankGuide on AIME24 using DeepSeek-R1 and Qwen3 model pairs. Progressive incorporation of rank-guided steering and routing improves accuracy while reducing reasoning steps and latency compared to entropy-based routing baselines.}
\setlength{\tabcolsep}{4pt}
\resizebox{\linewidth}{!}{
\begin{tabular}{l l c c c}
\toprule
\textbf{Method} & \textbf{Thresholds} & \textbf{Acc (\%)} & \textbf{Latency (min)} & \textbf{Avg. steps} \\
\midrule
\multicolumn{5}{c}{\textbf{DeepSeek-R1 (1.5B / 32B) on AIME24}} \\
\midrule
GlimpRouter~\citep{zeng2026glimprouter} & \rev{$T_e=0.9$} & 46.7 & 81 & 305 \\
+ Rank-guided Steering & \rev{$T_e=0.9$} & 50.0 (+3.3) & 74 (1.09$\times$) & 278 (-8.9\%) \\
\rowcolor{orange!15} + Rank-guided Routing &  \rev{$T_e=0.9, T_{r_1}=7, T_{r_2}=60$} & \textbf{56.7 (+10.0)} & 69 (1.17$\times$) & 246 (-19.3\%) \\
\rowcolor{orange!15} &  \rev{$T_e=0.9, T_{r_1}=8, T_{r_2}=60$} & 53.3 (+6.6) & \textbf{67 (1.21$\times$)} & \textbf{236 (-22.6\%)} \\
\midrule
\multicolumn{5}{c}{\textbf{Qwen3 (4B / 32B) on AIME24}} \\
\midrule
GlimpRouter~\citep{zeng2026glimprouter} & \rev{$T_e=1.1$} & 50.0 & 53 & 207 \\
+ Rank-guided Steering & \rev{$T_e=1.1$} & 50.0 (+0.0) & 45 (1.18$\times$) & 171 (-17.4\%) \\
\rowcolor{orange!15} + Rank-guided Routing & \rev{$T_e=1.1,\; T_{r_1}=7, T_{r_2}=50$} & \textbf{53.3 (+3.3)} & 44 (1.20$\times$) & 194 (-6.3\%) \\
\rowcolor{orange!15} & \rev{$T_e=1.1,\; T_{r_1}=7, T_{r_2}=60$} & 50.0 (+0.0) & \textbf{42 (1.26$\times$)} & \textbf{180 (-13.0\%)} \\
\bottomrule
\end{tabular}
}
\label{tab:rankguide-abla}
\end{table}
\subsubsection{Ablation Study on RankGuide Components}
From Table~\ref{tab:rankguide-abla}, we observe that progressively integrating rank-guided steering and routing yields consistent improvements in both reasoning efficiency and accuracy on the AIME24 benchmark across different model families. 
Introducing rank-guided steering primarily reduces the average number of reasoning steps, while maintaining or slightly improving accuracy. 
Building upon this, incorporating rank-guided routing further improves accuracy by explicitly identifying over-confident reasoning steps and routing them to stronger LRMs, while still reducing overall reasoning length and improving latency. 
For example, on DeepSeek-R1, RankGuide improves accuracy by up to +10.0\% while reducing average reasoning steps by 22.6\% and achieving $1.21\times$ faster inference compared to the GlimpRouter baseline.

\begin{table}[t]
\centering
\revon  
\caption{Ablation of RankGuide on AIME25 with DeepSeek-R1 (1.5B/32B), with and without collapsed-trajectory early stopping (ES).}
\setlength{\tabcolsep}{4pt}
\resizebox{\linewidth}{!}{
\begin{tabular}{llcccc}
\toprule
\textbf{Method} & \textbf{Config} & \textbf{Acc (\%)} & \textbf{Latency (min)} & \textbf{Avg. Tokens} & \textbf{ES fire rate} \\
\midrule
GlimpRouter~\citep{zeng2026glimprouter} & $T_e=1.1$ & 40.0 & 90 & 10640 & 0.000 \\
\midrule
RankGuide w/o ES & $T_e=0.9$, $(T_{r_1},T_{r_2})=(7,70)$ & \textbf{43.3} & 87 (-3.3\%) & 9555 (-10.2\%) & 0.000 \\
RankGuide w/o ES & $T_e=0.9$, $(T_{r_1},T_{r_2})=(7,80)$ & 40.0 & 87 (-3.3\%) & 9530 (-10.4\%) & 0.000 \\
\rowcolor{orange!15}RankGuide & $T_e=0.9$, $(T_{r_1},T_{r_2})=(7,70)$ & 40.0 & 84 (-6.7\%) & 9196 (-13.6\%) & 0.100 \\
\rowcolor{orange!15}RankGuide & $T_e=0.9$, $(T_{r_1},T_{r_2})=(7,80)$ & 40.0 & \textbf{80 (-11.1\%)} & \textbf{8522 (-19.9\%)} & 0.267 \\
\bottomrule
\end{tabular}
}
\label{tab:latency-decomp}
\end{table}

{\revon
Table~\ref{tab:latency-decomp} separates the two mechanisms in rank-guided routing: step-level routing and early stopping.
We define early-stopping (ES) fire rate as the fraction of problems terminated by the persistent-low-rank condition of Eq.~\ref{eq:early_stop}.
Overall, RankGuide without early stopping (with only step-level routing and steering) accounts for a ${\approx}3\%$ latency and ${\approx}10\%$ token reduction relative to the entropy-only baseline, by suppressing redundant self-validation and avoiding LRM calls on steps where the SRM already reasons well.
Collapsed-trajectory early stopping contributes a further $3$--$7$ minutes, scaling monotonically with its fire rate ($3$ min at $0.100$, $7$ min at $0.267$), evidence that the saving comes from truncating degenerate trajectories.
Accuracy is preserved at $40.0\%$ across both early-stopping configurations, giving a cumulative $11.1\%$ latency reduction at $19.9\%$ fewer tokens.
Appendix~\ref{appendix:rank-thresh} reports how often early stopping fires and the accuracy and latency trends of our RankGuide across a broader threshold sweep.
\par}

\begin{table}[t]
\centering
\caption{Ablation of rank-guided steering on AIME 24 and AIME 25 using DeepSeek-R1-Qwen-1.5B. We report accuracy, total token length, and the number of calibration samples used to generate the steering vector.}
\setlength{\tabcolsep}{6pt}
\resizebox{\linewidth}{!}{
\begin{tabular}{lccc|ccc}
\toprule
\textbf{Method} 
& \multicolumn{3}{c}{\textbf{AIME 24}} 
& \multicolumn{3}{c}{\textbf{AIME 25}} \\
\cmidrule(lr){2-4} \cmidrule(lr){5-7}
& Acc (\%) & \#Tokens & \#Samples 
& Acc (\%) & \#Tokens & \#Samples \\
\midrule
SRM only 
& 26.7 (0.0) & 14368 (0.0\%) & / 
& 16.7 (0.0) & 16475 (0.0\%) & / \\
+ SEAL Steering 
& 33.3 (+6.6) & 14636 (+1.9\%) & 1000 
& 13.3 (-3.4) & 15336 (-6.9\%) & 1000 \\
\rowcolor{orange!15} + RankGuide Steering 
& \textbf{33.3 (+6.6)} & \textbf{13973 (-2.7\%)} & \textbf{738} 
& \textbf{16.7 (0.0)} & \textbf{15082 (-8.5\%)} & \textbf{632} \\
\bottomrule
\end{tabular}
}
\label{tab:steering_aime}
\end{table}
\subsubsection{Ablation Study on Tensor-Rank-Guided Steering}
We evaluate the effectiveness of tensor-rank-guided steering by comparing it with the SRM-only baseline and the prior SEAL steering method on AIME 24 and AIME 25 using DeepSeek-R1-Qwen-1.5B.
\rev{Steering targets token reduction at non-degraded accuracy.}
As shown in Tab.~\ref{tab:steering_aime}, RankGuide \rev{does this on both datasets: $+6.6$ accuracy points at $2.7\%$ fewer tokens on AIME24, and accuracy held at the SRM baseline ($16.7\%$) at $8.5\%$ fewer tokens on AIME25.}
In contrast, SEAL exhibits unstable behavior, leading to either decreased accuracy \rev{($-3.4\%$)} or increased token length \rev{($+1.9\%$)}.
For SEAL, we construct the steering vector using 1000 samples from the MATH-Train dataset, following the original setup. For RankGuide, we apply the TT-rank-based calibration sample filtering described in \S~\ref{sec:rank_guided_steering} to obtain a higher-quality subset for steering vector construction. We report the number of remaining calibration samples after filtering in the table, where less informative or harmful samples are removed.
Overall, tensor-rank-guided steering is a more reliable and efficient alternative to prior steering methods, improving both reasoning quality and generation efficiency.

\begin{table}[t]
\centering
\revon  
\caption{Effect of rank-guided steering on routing frequency, evaluated on AIME25 with DeepSeek-R1 1.5B/32B (without steering $\rightarrow$ with steering).}
\setlength{\tabcolsep}{6pt}
\begin{tabular}{lcccc}
\toprule
$(T_{r_1}, T_{r_2}, T_e)$ & \textbf{Acc (\%)} & $\Delta$ \textbf{LRM-routing frac.} & $\Delta$ \textbf{Latency} & $\Delta$ \textbf{Steps} \\
\midrule
$(7, 80, 0.9)$ & $40.0 \rightarrow 40.0$ & $\mathbf{-1.6\%}$ & $-10.0\%$ & $-12.0\%$ \\
$(7, 70, 0.9)$ & $40.0 \rightarrow 40.0$ & $\mathbf{-5.4}\%$ & $-6.7\%$ & $-3.6\%$ \\
\bottomrule
\end{tabular}
\vspace{-2mm}
\label{tab:steer-routing}
\end{table}

{\revon
Rank-guided steering is calibrated offline to counteract representational collapse, while rank-guided routing escalates exactly the steps where collapse is detected during decoding.
Effective steering should therefore \textit{reduce} the number of low-rank routing triggers.
Table~\ref{tab:steer-routing} tests this by holding $(T_{r_1}, T_{r_2}, T_e)$ and early stopping fixed and toggling only the steering vector: accuracy is unchanged while latency and the escalated fraction both fall, the latter by $1.6\%$ and $5.4\%$.
Steering thus mitigates collapse before routing must intervene, leaving routing to handle residual uncertainty and overconfidence.
\par}

\section{Conclusion}
We analyze the failure modes of small reasoning models (SRMs) in collaborative systems -- uncertainty, overconfidence, and heavy revalidation -- and find that the tensor-train rank of consecutive hidden states exposes failures that output entropy alone misses.
Building on this, we propose \textbf{RankGuide}, a training-free framework that uses the same signal twice: offline, to filter the calibration set from which we extract a steering vector that trims redundant generation, and online, to route unreliable steps to the LRM and terminate collapsed trajectories.
\rev{Across mathematics, code generation and scientific QA, RankGuide cuts latency by up to $1.75\times$ and $1.36\times$ over LRM-only inference and state-of-the-art step-level routing methods, respectively, at competitive accuracy.}



\section*{Acknowledgments}
This work is co-funded by Intel Strategic Research Sectors (SRS) - Systems Integration
SRS \& Devices SRS.


\bibliography{colm2026_conference}
\bibliographystyle{colm2026_conference}

\newpage
\appendix
\section{Appendix}
\subsection{Error-bounded TT Rank Selection Algorithm}
\label{appendix:algorithm}

\paragraph{Explanation.}
Algorithm~\ref{alg:tt_rank} applies TT-SVD~\citep{oseledets2011tensor} with error-bounded rank selection to determine TT ranks for the hidden-state tensor $\mathcal{H}_{t-W:t}$. The procedure allocates the total error budget uniformly across the $N$ decomposition steps and selects, at each step, the smallest rank whose discarded singular-value energy is within the per-step threshold. The resulting ranks $\{r_k\}_{k=1}^N$ characterize the intrinsic low-rank structure of the tensor: $r_1$ reflects the effective rank along the reasoning-step dimension, while $\{r_k\}_{k\geq 2}$ characterize dependencies in the hidden feature dimensions.
We next formalize the TT reconstruction used in the error analysis and show that the proposed algorithm satisfies the desired error bound.

\begin{algorithm}[t]
\caption{Error-bounded TT-SVD Rank Selection}
\label{alg:tt_rank}
\begin{algorithmic}[1]
\REQUIRE Hidden states tensor $\mathcal{H}_{t-W:t} \in \mathbb{R}^{W \times d_1 \times \cdots \times d_N}$, target error $\epsilon$
\ENSURE TT ranks $\{r_k\}_{k=1}^N$, TT cores $\{\mathcal{G}_k\}_{k=0}^N$

\STATE Compute tensor Frobenius norm $\|\mathcal{H}_{t-W:t}\|_F^2$
\STATE Set per-step error budget:
\[
\delta^2 \leftarrow \frac{\epsilon^2 \|\mathcal{H}_{t-W:t}\|_F^2}{N}
\]
\STATE Initialize $r_0 = 1$, unfolding tensor $\mathcal{H}_{t-W:t}' \leftarrow \mathcal{H}_{t-W:t}$
\FOR{$k = 1$ to $N$}
    \STATE Reshape unfolding:
    \[
    \mathbf{H}' \leftarrow \mathrm{reshape}(\mathcal{H}_{t-W:t}', (r_{k-1} d_k, -1))
    \]
    \STATE Compute SVD:
    \[
    \mathbf{H}' = \mathbf{U} \mathbf{\Sigma} \mathbf{V}^\top
    \]
    \STATE Select smallest $r_k$ such that:
    \[
    \sum_{i > r_k} \sigma_i^2 \le \delta^2
    \]
    \STATE Truncate $\mathbf{U}, \mathbf{\Sigma}, \mathbf{V}$ to rank $r_k$
    \STATE Update $k$-th tensor core and unfolding:
    \[
    \mathcal{G}_{k-1}\leftarrow \mathbf{U}_r, \mathbf{H}' \leftarrow \mathbf{\Sigma}_r \mathbf{V}_r^\top
    \]
\ENDFOR
\STATE  Update last tensor core: $\mathcal{G}_{N}\leftarrow \mathbf{H}'$
\RETURN $\{r_k\}_{k=1}^N$, $\{\mathcal{G}_k\}_{k=0}^N$
\end{algorithmic}
\end{algorithm}

\begin{definition}[TT Reconstruction]
Given TT cores $\{\mathcal{G}_k\}_{k=0}^{N}$ obtained from $\mathcal{H}_{t-W:t}$, where
\[
\mathcal{G}_0 \in \mathbb{R}^{W \times r_1}, \quad
\mathcal{G}_k \in \mathbb{R}^{r_{k-1} \times d_k \times r_k} \ (1 \le k \le N),
\]
the reconstructed tensor $\hat{\mathcal{H}}_{t-W:t}$ is obtained via sequential contractions:
\begin{equation}
\hat{\mathcal{H}}_{t-W:t}
= \mathcal{G}_0 \times_3^1 \mathcal{G}_1 \times_3^1 \cdots \times_3^1 \mathcal{G}_N,
\label{eqa:reconstruction}
\end{equation}
where $\times_3^1$ denotes contraction between the third mode of $\mathcal{G}_{k-1}$ and the first mode of $\mathcal{G}_k$, corresponding to the shared rank dimension $r_k$.
\end{definition}

\begin{theorem}
Let $\hat{\mathcal{H}}_{t-W:t}$ be the TT approximation constructed as in Eq.~\eqref{eqa:reconstruction}, with TT cores obtained from Algorithm~\ref{alg:tt_rank}. Then the reconstruction error satisfies
\[
\frac{\|\mathcal{H}_{t-W:t}-\hat{\mathcal{H}}_{t-W:t}\|_F}{\|\mathcal{H}_{t-W:t}\|_F} \le \epsilon.
\]
\end{theorem}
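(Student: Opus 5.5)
The plan is to adapt the standard TT-SVD error analysis of \citet{oseledets2011tensor}. The core idea is that the errors introduced at the $N$ truncation steps are mutually orthogonal, so their squared norms add. Each step contributes at most $\delta^2 = \epsilon^2\|\mathcal{H}_{t-W:t}\|_F^2/N$ by the rank selection rule in Algorithm~\ref{alg:tt_rank}. Summing over the $N$ steps gives a total squared error of at most $\epsilon^2\|\mathcal{H}_{t-W:t}\|_F^2$, and taking square roots yields the claim.

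Concretely, I will write $\mathbf{B}_k$ for the matrix $\mathbf{H}'$ that is decomposed at step $k$ (after reshaping to $r_{k-1}d_k$ rows). Let $\mathbf{U}_k$ denote its truncated left factor, with orthonormal columns, and set $\mathbf{C}_k=\mathbf{U}_k^\top\mathbf{B}_k=\mathbf{\Sigma}_r\mathbf{V}_r^\top$, which is exactly the matrix passed on to step $k+1$. By Eckart--Young, the rank selection rule gives
\[
\|\mathbf{B}_k-\mathbf{U}_k\mathbf{C}_k\|_F^2=\sum_{i>r_k}\sigma_i^2\le\delta^2 .
\]
Let $\mathbf{P}_k$ be the matrix obtained by contracting the cores $\mathcal{G}_0,\dots,\mathcal{G}_{k-1}$, reshaped to size $(W d_1\cdots d_{k-1})\times r_k$ (indexing so that step $k$ produces $\mathcal{G}_{k-1}$). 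I will show by induction that $\mathbf{P}_k$ has orthonormal columns. The reason is that $\mathbf{P}_{k+1}=(\mathbf{P}_k\otimes \mathbf{I}_{d_k})\mathbf{U}_k$ up to reshaping, and a Kronecker product with the identity preserves orthonormality.

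The telescoping step goes as follows. The full unfolding of $\mathcal{H}_{t-W:t}$ equals $\mathbf{P}_k\mathbf{B}_k$ up to reshape, plus the previously discarded residuals. More precisely, write
\[
\mathcal{H}-\hat{\mathcal{H}}=\sum_{k=1}^{N}\mathcal{E}_k,
\]
where $\mathcal{E}_k$ is $(\mathbf{P}_{k-1}\otimes\mathbf{I})(\mathbf{B}_k-\mathbf{U}_k\mathbf{C}_k)$, reshaped. The last core $\mathcal{G}_N=\mathbf{C}_N$ is kept exactly, so it contributes no further error. Since $\mathbf{P}_{k-1}\otimes\mathbf{I}$ is an isometry, $\|\mathcal{E}_k\|_F^2=\|\mathbf{B}_k-\mathbf{U}_k\mathbf{C}_k\|_F^2\le\delta^2$.

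The main obstacle is orthogonality, $\langle\mathcal{E}_j,\mathcal{E}_k\rangle=0$ for $j<k$, which is what lets the squared norms add. Here is how I will argue it. Every $\mathcal{E}_k$ with $k>j$ lies in the range of $\mathbf{P}_{j}\otimes\mathbf{U}_j\otimes\mathbf{I}$, viewed as an unfolding at split $j$, so its column space is contained in that of $(\mathbf{P}_{j-1}\otimes\mathbf{I})\mathbf{U}_j$. The residual $\mathbf{B}_j-\mathbf{U}_j\mathbf{C}_j=(\mathbf{I}-\mathbf{U}_j\mathbf{U}_j^\top)\mathbf{B}_j$, on the other hand, has columns orthogonal to $\operatorname{range}(\mathbf{U}_j)$. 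After the isometry, $\mathcal{E}_j$ is therefore orthogonal to everything generated later. The delicate part is the careful index bookkeeping across reshapes, and I will handle it by always comparing at the unfolding with $W d_1\cdots d_{j-1}\times\cdots$ rows.

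With orthogonality in hand, $\|\mathcal{H}-\hat{\mathcal{H}}\|_F^2=\sum_k\|\mathcal{E}_k\|_F^2\le N\delta^2=\epsilon^2\|\mathcal{H}\|_F^2$, which proves the theorem.
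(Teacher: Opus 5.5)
Your proposal is correct and follows the same route as the paper's proof: each step's truncation error is bounded by $\delta^2$, the errors are mutually orthogonal, so the total squared error is at most $N\delta^2=\epsilon^2\|\mathcal{H}_{t-W:t}\|_F^2$. Your version actually proves the orthogonality and isometry claims (via the left-orthonormal interface matrices $\mathbf{P}_k$) that the paper only asserts, though the recursion has a harmless index slip and should read $\mathbf{P}_{k+1}=(\mathbf{P}_k\otimes\mathbf{I}_{d_k})\mathbf{U}_{k+1}$, as your later use of $\mathbf{P}_j=(\mathbf{P}_{j-1}\otimes\mathbf{I})\mathbf{U}_j$ confirms.
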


\begin{proof}
At step $k$, let $\mathcal{E}_k$ denote the truncation error induced by discarding singular values beyond rank $r_k$. By construction of Algorithm~\ref{alg:tt_rank}, the selected rank satisfies
\[
\|\mathcal{E}_k\|_F^2 = \sum_{i>r_k}\sigma_i^2 \le \delta^2,
\]
where
\[
\delta^2 = \frac{\epsilon^2 \|\mathcal{H}_{t-W:t}\|_F^2}{N}.
\]

TT-SVD is a sequence of orthogonal projections, so the truncation errors from different steps are orthogonal. Therefore, the total squared reconstruction error is bounded by the sum of the per-step squared errors:
\[
\|\mathcal{H}_{t-W:t}-\hat{\mathcal{H}_{t-W:t}}\|_F^2
\le \sum_{k=1}^{N}\|\mathcal{E}_k\|_F^2
\le N\delta^2
= \epsilon^2 \|\mathcal{H}_{t-W:t}\|_F^2.
\]
Taking square roots on both sides gives
\[
\frac{\|\mathcal{H}_{t-W:t}-\hat{\mathcal{H}_{t-W:t}}\|_F}{\|\mathcal{H}_{t-W:t}\|_F} \le \epsilon.
\]
\end{proof}

\begin{figure}[!t]
    \centering
    \includegraphics[width=\linewidth]{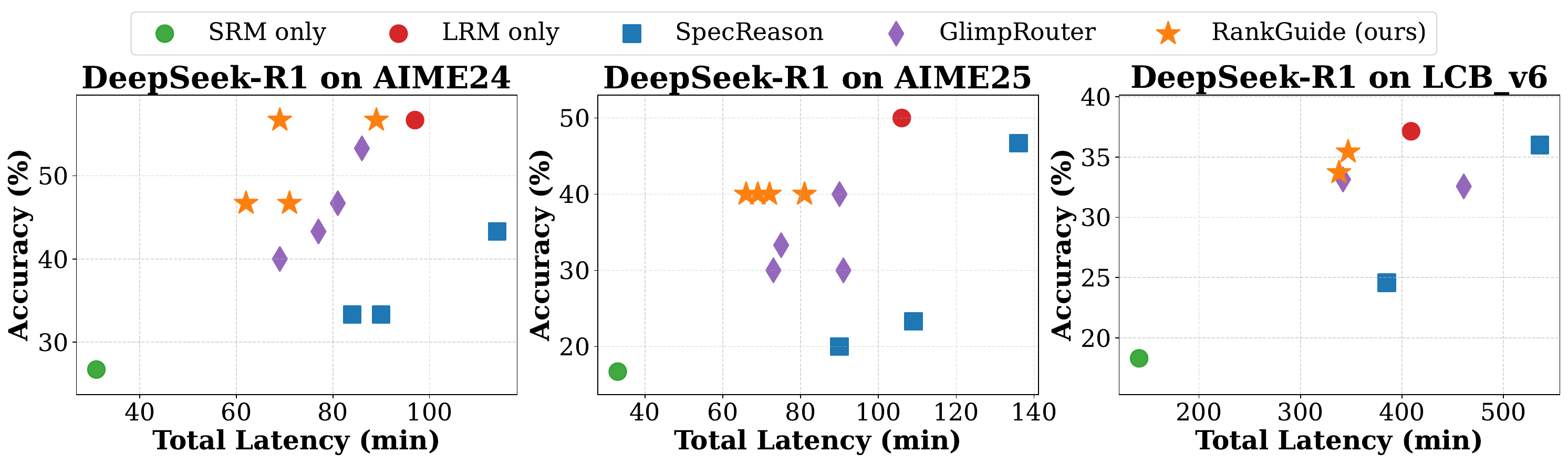}
    \caption{Accuracy vs. reasoning latency on AIME-24, AIME-25, LCB-v6 using DeepSeek-R1 (Qwen-1.5B / 32B) under multiple entropy thresholds for RankGuide and GlimpRouter and acceptance scores for Specreason.}
    \label{fig:abla-threshold}
\end{figure}

\subsection{Additional Experimental Setups}
\label{appendix:setup}
\textbf{Implementation Details.}
We implement all methods within a unified vLLM~\citep{kwon2023efficient} inference framework with consistent decoding configurations, and integrate our steering mechanism into vLLM. 
For DeepSeek-R1 models, we use greedy decoding, while for Qwen3 models we use temperature $0.6$ and top-$p$ $0.95$. 
The maximum generation length is set to 16{,}384 tokens for AIME and 8{,}192 tokens for LiveCodeBench \rev{and GPQA-Diamond}.
We evaluate accuracy using pass@1, i.e., whether the first generated solution is correct, and measure latency as the total wall-clock time required to produce the final answer, including all intermediate routing and verification steps. 
For both rank-guided routing and steering, we adopt a step-wise reasoning protocol, where routing decisions are made and steering vectors are applied at reasoning step boundaries identified by delimiter tokens (e.g., ``\texttt{\textbackslash n\textbackslash n}''). 
Our method computes routing signals based on a sliding window of recent hidden states with window size $W=10$, \rev{reshaped to $W \times 16 \times 16 \times d_{\text{hid}}/256$}, while entropy-based baselines rely on token-level output distributions.
\rev{Following GlimpRouter~\citep{zeng2026glimprouter}, the entropy signal is obtained from a single prefill pass with the preceding steps in context, taking the top-$k$ log-probabilities at the first token of the upcoming step with $k=20$ and renormalizing them to sum to one.}
We evaluate DeepSeek models on 4$\times$ A100 40GB GPUs, and Qwen3 models on 2$\times$ H100 GPUs.
\rev{All latency measurements for a given benchmark are collected on the same node without co-tenancy, and are reported as elapsed wall-clock time from the start of the first problem to the completion of the last, so that routing, steering and verification overheads are all included.}

\begin{figure}[!t]
    \centering
    \includegraphics[width=\linewidth]{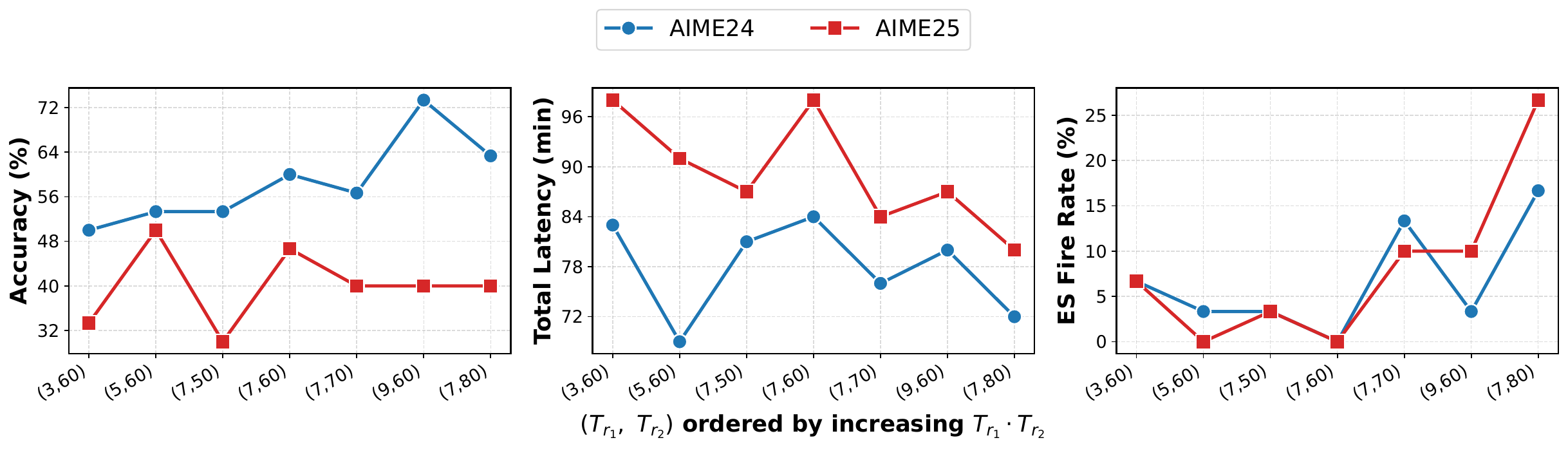}
    \caption{\rev{Ablation study on accuracy (left), latency (middle), and early-stopping (ES) fire rate (right) vs. rank threshold using AIME24 and AIME25 with the DeepSeek-R1 pair. Settings are ordered by increasing $T_{r_1} \cdot T_{r_2}$.}}
    \label{fig:rank-thresh}
\end{figure}

{\revon
\textbf{Routing threshold calibration protocol.}
The entropy threshold $T_e$ is selected from $\{0.7, 0.9, 1.0, 1.1\}$ following GlimpRouter~\citep{zeng2026glimprouter}, and we evaluate our method and GlimpRouter under the same settings for fair comparison.
On MATH-Train, the median minimum ranks are $(r_1, r_2) = (9.0, 79.0)$ over correct responses and $(5.0, 59.0)$ over incorrect responses, and we place $T_{r_1}$ and $T_{r_2}$ between the corresponding pair of medians (\S~\ref{sec:rank_guided_routing}).
The rank thresholds are therefore selected from $T_{r_1} \in \{6, 7, 8\}$ and $T_{r_2} \in \{50, 60, 70, 80\}$, ranges centred on this calibration-derived region, and the same rule was applied unchanged on all evaluation datasets.
For SpecReason~\citep{panspecreason}, the acceptance score $T_a$ is selected from $\{7, 8, 9\}$ following their paper.

\textbf{Steering vector calibration protocol.}
The steering vector is calibrated once on MATH-Train and applied to all evaluation datasets \textit{without} re-calibration.
This transfer is by design rather than by convenience: the vector is built from contrastive activations between execution and validation steps, and therefore isolates a reflection / over-validation direction that is a property of multi-step chain-of-thought behavior rather than of mathematical content.
Prior work empirically supports the same transfer, with SEAL~\citep{chen2025seal} and SkipKV~\citep{tian2025skipkv} both calibrating steering vectors on MATH-Train and applying them to LiveCodeBench without re-calibration at comparable performance.
We fix the steering factor $\alpha=1.0$ following SEAL~\citep{chen2025seal} as a calibration-free default.
\par}

\subsection{Additional Ablation Study on Routing Thresholds}
\label{appendix:routing-thresh}
As shown in Fig.~\ref{fig:abla-threshold}, we study the effect of different routing thresholds on the accuracy–latency trade-off for the DeepSeek-R1 model pair across AIME-24, AIME-25, and LCBv6.
Specifically, the routing thresholds are selected from $T_e \in \{0.7, 0.9, 1.0, 1.1\}$.
For fair comparison, we evaluate both our method and GlimpRouter under the same entropy threshold settings.
For SpecReason, the acceptance score is selected from $T_{a} \in \{7, 8, 9\}$.
Across all three benchmarks, RankGuide consistently forms the Pareto frontier, achieving higher accuracy at comparable or lower latency than all baselines.
In contrast, SpecReason incurs substantially higher latency while yielding only modest accuracy improvements, and can even be slower than the LRM baseline.
While GlimpRouter performs better than SpecReason, its purely entropy-based routing leads to sub-optimal behavior; RankGuide improves upon it by routing both uncertain and over-confident steps, reducing redundant re-validation, and achieving a better accuracy–latency trade-off.

\begin{figure}[!t]
    \centering
    \includegraphics[width=0.62\linewidth]{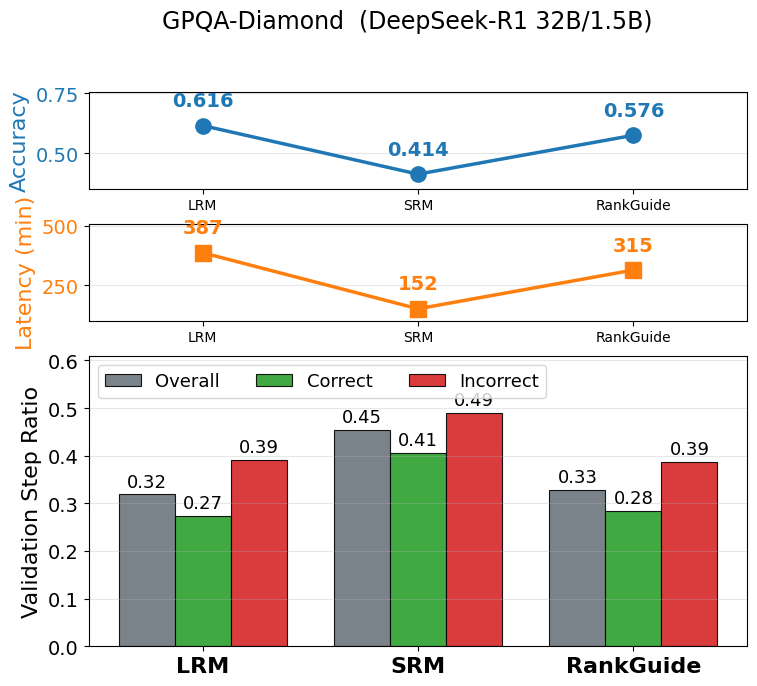}
    \caption{\rev{Comparison of accuracy, latency and validation step ratio on GPQA-Diamond with the DeepSeek-R1 pair.}}
    \label{fig:gpqa-revalidation}
\end{figure}

\subsection{Additional Ablation Study on Rank Thresholds}
\label{appendix:rank-thresh}
{\revon
We next study the effect of the rank thresholds on accuracy, latency and early-stopping fire rate.
Fig.~\ref{fig:rank-thresh} sweeps seven $(T_{r_1}, T_{r_2})$ combinations spanning $T_{r_1} \in \{3, 5, 7, 9\}$ and $T_{r_2} \in \{50, 60, 70, 80\}$ at $T_e = 0.9$ on AIME24 and AIME25 with the DeepSeek-R1 pair, deliberately extending beyond the calibration-derived region in both directions.

Three observations follow.
First, accuracy never collapses when the rank thresholds are selected by the calibration protocol of \S~\ref{sec:rank_guided_routing}: the two swept settings that satisfy it, $(7, 60)$ and $(7, 70)$, reach $60.0\%$ and $56.7\%$ on AIME24 and $46.7\%$ and $40.0\%$ on AIME25.
The settings that do collapse are precisely those the protocol excludes: $(7, 50)$ and $(3, 60)$, whose thresholds sit at or below the incorrect-response medians, fall to $30.0\%$ and $33.3\%$ on AIME25.
The protocol therefore screens out the failure region rather than merely landing inside a flat one, supporting its use in place of per-dataset threshold tuning.
Second, the early-stopping fire rate grows with $T_{r_1} \cdot T_{r_2}$, reaching $16.7\%$ on AIME24 and $26.7\%$ on AIME25 at the loosest setting, and the lowest-latency point on each dataset is the setting where it fires most.
Early stopping is thus the mechanism by which looser thresholds buy latency, though the relationship is not monotone: the tightest settings admit few collapsed trajectories, so their latency is governed by per-step routing instead.
Third, early stopping never dominates the policy, terminating at most $26.7\%$ of problems, so per-step routing governs the decision on the large majority and the two stages remain complementary.
Some settings outside the calibration region attain higher accuracy than the default reported in Table~\ref{tab:main_results}, such as $(9, 60)$ on AIME24 and $(5, 60)$ on AIME25.
We nonetheless report the calibration-derived setting, which avoids extreme hyperparameter search on the evaluation set and keeps the comparison fair.
\par}

\subsection{Routing Overhead Analysis}
\label{appendix:overhead}
\rev{Computing a TT decomposition at every reasoning step is a natural efficiency concern, so we quantify it by instrumenting the rank computation at every routing decision across all AIME24 runs with the DeepSeek-R1 pair. The mean per-step TT-SVD takes $5.9$\,ms, ${\approx}1.2\%$ of end-to-end inference latency. The cost is stable across $(T_{r_1}, T_{r_2}, T_e)$, i.e.\ it does not grow with task difficulty or hyperparameter choice, because the tensor shape is fixed by $W$ and $d_{\text{hid}}$ rather than by the input. All latencies reported in the paper include this overhead.}

\subsection{Revalidation Profile under RankGuide}
\label{appendix:revalidation}
{\revon
Table~\ref{tab:main_results} shows that RankGuide lowers the aggregate validation step ratio, and we further reproduce the analysis of Fig.~\ref{fig:SRM motivation} (right) under RankGuide on GPQA-Diamond, shown in Fig.~\ref{fig:gpqa-revalidation}.
Two findings follow.
First, the incorrect $>$ correct revalidation gap holds under all three systems.
Second, RankGuide compresses the SRM's indiscriminate revalidation profile ($0.45$ / $0.41$ / $0.49$ for overall / correct / incorrect) to essentially the LRM's selective profile ($0.33$ / $0.28$ / $0.39$ against the LRM's $0.32$ / $0.27$ / $0.39$), while running $18.6\%$ faster than the LRM.
Together, these results indicate that RankGuide removes most of the SRM's redundant validation while preserving the LRM's correct-versus-incorrect verification structure, yielding a better accuracy--latency trade-off.
\par}

\subsection{Sensitivity to Tensor Factorization}
\label{appendix:shape}
\begin{table}[t]
\centering
\revon  
\caption{Sensitivity of the hidden-feature rank $r_2$ to the reshaping factorization, measured as
the median $r_2$ over correct and incorrect MATH-Train responses. 
}
\setlength{\tabcolsep}{6pt}
\begin{tabular}{lcccc}
\toprule
 & \multicolumn{2}{c}{\textbf{DeepSeek-R1-Distill-Qwen-1.5B}} & \multicolumn{2}{c}{\textbf{Qwen3-4B}} \\
\cmidrule(lr){2-3}\cmidrule(lr){4-5}
\textbf{Tensor shape} & Correct & Incorrect & Correct & Incorrect \\
\midrule
$16 \times 16 \times d_{\text{hid}}/256$ & 79.0 & 59.0 & 83.0 & 71.5 \\
$8 \times 8 \times d_{\text{hid}}/64$ & 62.0 & 37.0 & 53.0 & 42.0 \\
\bottomrule
\end{tabular}
\label{tab:shape}
\end{table}

{\revon
Reshaping the hidden-state window into a higher-order tensor is a design choice, and the rank values our thresholds are compared against depend on it.
We therefore report how that choice is made and what changes under an alternative.
We set $d_1 = d_2 = 16$ for two reasons: it keeps the three feature modes close in size, so no single mode dominates the TT-cores, and $256$ divides $d_{\text{hid}}$ for both SRMs we evaluate ($1536$ and $2560$), so one factorization serves both model families without per-model tuning.
Table~\ref{tab:shape} repeats the rank-distribution study of \S~\ref{sec:motivation1} under an alternative $8 \times 8$ factorization.
The reasoning-step rank $r_1$ is unaffected by construction: the first TT-SVD step of Algorithm~\ref{alg:tt_rank} determines it from the window size $W$, not from the feature factorization.
The hidden-feature rank $r_2$ does shift, as expected, but the correct $>$ incorrect ordering that our routing signal relies on is preserved under both factorizations and on both model families.
The calibration rule therefore transfers directly: moving from $16 \times 16$ to $8 \times 8$ shifts the recommended $T_{r_2}$ range from approximately $[60, 80]$ to $[40, 60]$, while the rule that generates it -- place the threshold between the correct- and incorrect-response medians -- is unchanged.
\par}


\end{document}